\newtheorem{theorem}{Theorem}
\newtheorem{lemma}{Lemma}
\newtheorem{remark}{Remark}
\theoremstyle{definition}
\newtheorem{definition}{Definition}
\newtheorem{problem}{Problem}
\title{\LARGE \bf
Learning Safety for Obstacle Avoidance via Control Barrier Functions}
\author{Shuo Liu$^{1}$, Zhe Huang$^{1}$ and Calin A. Belta$^{2}$
\thanks{This work was supported in part by the NSF under grant IIS-2024606 at Boston University.}
\thanks{$^{1}$S. Liu and Z. Huang are with Boston
University, Brookline, MA, USA {\tt\small \{liushuo, huangz7\}@bu.edu}}
\thanks{$^{2}$C. Belta is with the Department of Electrical and Computer Engineering and with the Department of Computer Science, University of Maryland, College Park, MD, USA 
        {\tt\small cbelta@umd.edu}}%
}
\begin{document} 
\maketitle

\begin{abstract}
Obstacle avoidance is central to safe navigation, especially for robots with arbitrary and nonconvex geometries operating in cluttered environments. Existing Control Barrier Function (CBF) approaches often rely on analytic clearance computations, which are infeasible for complex geometries, or on polytopic approximations, which become intractable when robot configurations are unknown. To address these limitations, this paper trains a residual neural network on a large dataset of robot–obstacle configurations to enable fast and tractable clearance prediction, even at unseen configurations. The predicted clearance defines the radius of a Local Safety Ball (LSB), which ensures continuous-time collision-free navigation. The LSB boundary is encoded as a Discrete-Time High-Order CBF (DHOCBF), whose constraints are incorporated into a nonlinear optimization framework. To improve feasibility, a novel relaxation technique is applied. The resulting framework ensure that the robot’s rigid-body motion between consecutive time steps remains collision-free, effectively bridging discrete-time control and continuous-time safety. We show that the proposed method handles arbitrary, including nonconvex, robot geometries and generates collision-free, dynamically feasible trajectories in cluttered environments. Experiments demonstrate millisecond-level solve times and high prediction accuracy, highlighting both safety and efficiency beyond existing CBF-based methods.
\end{abstract}

\section{Introduction}
\label{sec:Introduction}
Ensuring safety in autonomous robotic navigation is a fundamental yet challenging problem, especially when robots and obstacles possess complex (e.g., nonconvex, nonsmooth) geometries. A widely used formal tool for safety-critical control is the Control Barrier Function (CBF), which enforces safety by rendering a designated set forward invariant for a dynamical system. Recent studies have shown that CBFs can be combined with Control Lyapunov Functions (CLFs) to achieve both stabilization and safety while respecting input constraints. This integration yields real-time implementable optimization problems, typically formulated as Quadratic Programs (QPs) when the cost is quadratic \cite{ames2014control, ames2016control}.

Beyond relative-degree one constraints, several extensions of CBFs have been proposed. Exponential CBFs \cite{nguyen2016exponential} and High-Order CBFs (HOCBFs) \cite{xiao2021high, tan2021high} address safety functions with higher relative degrees. These methods have been applied to robotic obstacle avoidance (e.g., \cite{thirugnanam2022duality, chen2025control, wu2025optimization,peng2023safe}). However, due to the complexity of obstacle shapes, the authors of \cite{peng2023safe} employed logistic regression to construct polynomial barrier functions that approximate obstacle boundaries, enabling analytical enforcement of nonnegative clearance. Nevertheless, this method could not handle robots with complex geometries. In \cite{thirugnanam2022duality, chen2025control}, the authors focused on polytopic robots and obstacles, where the CBFs were defined through polytope–to-polytope distances. Such functions were not globally differentiable, and hence safety could not be guaranteed at configurations where the CBF derivative was undefined. Although the authors of \cite{wu2025optimization} proposed a smooth approximation to unify multiple CBFs into a single differentiable function, this led to overly conservative control, since smoothing enlarged the safe set margin and restricted feasible control actions more than necessary.

To avoid reliance on differentiable CBFs and address discrete-time systems, Discrete-Time CBFs (DCBFs) were introduced in \cite{agrawal2017discrete}, and later extended to Discrete-Time High-Order CBFs (DHOCBFs) \cite{xiong2022discrete,liu2023iterative}. These methods have been applied to collision avoidance between polytopic robots and polytopic obstacles \cite{thirugnanam2022safety}, and even extended to handle robots navigating around nonconvex obstacles \cite{liu2025safety, liu2024learning}. However, \cite{thirugnanam2022safety} remains limited to polytopic robots. Accurately approximating complex geometries with polytopes incurs significant computational overhead at each robot configuration. Moreover, constructing such approximations at unknown future configurations is intractable, fundamentally limiting the applicability of this approach in predictive control settings. While the methods proposed in \cite{liu2025safety, liu2024learning} effectively address nonconvex obstacle avoidance, they do not extend to handling robots with complex shapes.


More recently, learning-based methods that directly regress a Signed Distance Function (SDF) have been explored (e.g., \cite{liu2023collision, li2024representing, das2025rnbf, long2021learning}). While such approaches can predict the clearance between the robot and obstacles, the methods in \cite{liu2023collision, li2024representing} enforce safety only at discrete configurations, without guaranteeing collision avoidance along the continuous trajectory between these configurations. In contrast, \cite{das2025rnbf, long2021learning} achieve continuous-time obstacle avoidance by incorporating SDF-based CBF constraints, but their frameworks are limited to point robots or robots with circular shapes, and are not applicable to robots with complex geometries.

These limitations motivate our proposed approach, which combines clearance prediction, local safety region construction, and continuous-time safety enforcement between discrete-time steps into a unified framework. A residual neural network is trained on a large dataset of robot–obstacle configurations to predict the clearance. The predicted clearance defines a Local Safety Ball (LSB), which is anchored to the robot's boundary to construct a local safety region and ensure continuous-time safety. The LSB is then encoded as DHOCBF constraints, which are incorporated into a nonlinear optimization problem. To improve feasibility, a novel relaxation technique is applied. This overall framework guarantees that the robot’s rigid-body motion between discrete time steps remains collision-free, thereby bridging discrete-time control with continuous-time collision avoidance.

  
    

We demonstrate through numerical results that the proposed framework can generate dynamically feasible, collision-free trajectories for robots with various shapes navigating cluttered, maze-like environments. The results further show that each optimization problem can be solved within milliseconds, confirming the approach’s efficiency and scalability.

\begin{table}[]
\vspace*{1mm}
\centering
\caption{List of acronyms.}
\begin{tabular}{|c|c|}
\hline
Acronyms & Meaning \\ \hline
CBF & Control Barrier Function \\
 DCBF & Discrete-Time Control Barrier Function \\
 DHOCBF & Discrete-Time High-Order Control Barrier Function \\
 SDF &  Signed Distance Function\\
 LSB & Local Safety Ball \\
 MLP &  Multilayer Perceptron\\
 EPA &  Extreme Point Anchor \\
 NLP & Nonlinear Programming  \\
 MSE &  Mean Squared Error\\
\hline
\end{tabular}
\label{tab:acronyms}
\end{table}
\section{Preliminaries}
\label{sec:Preliminaries}

In this section, we provide mathematical preliminaries on safety-critical control.  
We consider a discrete-time control system in the form:
\begin{equation}
\label{eq:discrete-dynamics}
\mathbf{x}_{t+1} = f(\mathbf{x}_t, \mathbf{u}_t),
\end{equation}
where $\mathbf{x}_t \in \mathcal{X} \subset \mathbb{R}^n$ is the state at time step $t \in \mathbb{N}$, $\mathbf{u}_t \in \mathcal{U} \subset \mathbb{R}^q$ is the control input, and the function $f: \mathbb{R}^n \times \mathbb{R}^q \to \mathbb{R}^n$ is assumed to be locally Lipschitz continuous.
We define safety as the forward invariance of a set $\mathcal{C}$. Specifically, system \eqref{eq:discrete-dynamics} is considered safe if, once initialized in $\mathcal{C}$, its state remains in $\mathcal{C}$ for all future time steps. We define $\mathcal{C}$ as the superlevel set of a continuous function $h: \mathbb{R}^n \to \mathbb{R}$:
\begin{equation}
\label{eq:safe-set}
\mathcal{C} \coloneqq \{ \mathbf{x} \in \mathbb{R}^n : h(\mathbf{x}) \geq 0 \}.
\end{equation}
\begin{definition}[Relative degree~\cite{liu2023iterative}]
\label{def:relative-degree}
The output $y_{t}=h(\mathbf{x}_{t})$ of system \eqref{eq:discrete-dynamics} is said to have relative degree $m$ if
\begin{equation}
\begin{split}
&y_{t+i}=h(\bar{f}_{i-1}(f(\mathbf{x}_{t},\mathbf{u}_{t}))), \ i \in \{1,2,\dots,m\},\\
 \text{s.t.} & \ \frac{\partial y_{t+m}}{\partial \mathbf{u}_{t}} \ne \textbf{0}_{q}, \frac{\partial y_{t+i}}{\partial \mathbf{u}_{t}}= \textbf{0}_{q},  \ i \in \{1,2,\dots,m-1\},
\end{split}
\end{equation}
i.e., $m$ is the number of steps (delay) in the output $y_{t}$ in order for any component of the control input $\mathbf{u}_{t}$ to explicitly appear ($\textbf{0}_{q}$ is the zero vector of dimension $q$). 
\end{definition}

In Def.~\ref{def:relative-degree}, $\bar{f}(\mathbf{x})\coloneqq f(\mathbf{x},0)$ denotes the uncontrolled state dynamics. 
The subscript of $\bar{f}$ indicates recursive composition:
$\bar{f}_{0}(\mathbf{x})=\mathbf{x}$ and 
$\bar{f}_{i}(\mathbf{x})=\bar{f}(\bar{f}_{i-1}(\mathbf{x}))$ for $i\ge1$.
We assume that $h(\mathbf{x})$ has
relative degree $m$ with respect to system (\ref{eq:discrete-dynamics}) based on Def. \ref{def:relative-degree}.
Starting with $\psi_{0}(\mathbf{x}_{t})\coloneqq h(\mathbf{x}_{t})$, we define a sequence of discrete-time functions $\psi_{i}:  \mathbb{R}^{n}\to\mathbb{R}$, $i=1,\dots,m$ as:
\begin{equation}
\label{eq:high-order-discrete-CBFs}
\psi_{i}(\mathbf{x}_{t})\coloneqq \bigtriangleup \psi_{i-1}(\mathbf{x}_{t})+\alpha_{i}(\psi_{i-1}(\mathbf{x}_{t})), 
\end{equation}
where $\bigtriangleup \psi_{i-1}(\mathbf{x}_{t})\coloneqq \psi_{i-1}(\mathbf{x}_{t+1})-\psi_{i-1}(\mathbf{x}_{t})$, and $\alpha_{i}(\cdot)$ denotes the $i^{th}$ class $\kappa$ function which satisfies $\alpha_{i}(\psi_{i-1}(\mathbf{x}_{t}))\le \psi_{i-1}(\mathbf{x}_{t})$ for $i=1,\ldots, m$.
A sequence of sets $\mathcal {C}_{i}$ is defined based on \eqref{eq:high-order-discrete-CBFs} as
\begin{equation}
\label{eq:high-order-safety-sets}
\mathcal {C}_{i}\coloneqq \{\mathbf{x}\in \mathbb{R}^{n}:\psi_{i}(\mathbf{x})\ge 0\}, \ i \in\{0,\ldots,m-1\}.
\end{equation}

\begin{definition}[DHOCBF~\cite{xiong2022discrete}]
\label{def:high-order-discrete-CBFs}
Let $\psi_{i}(\mathbf{x}), \ i\in \{1,\dots,m\}$ be defined by \eqref{eq:high-order-discrete-CBFs} and $\mathcal {C}_{i},\ i\in \{0,\dots,m-1\}$ be defined by \eqref{eq:high-order-safety-sets}. A function $h:\mathbb{R}^{n}\to\mathbb{R}$ is a Discrete-time High-Order Control Barrier Function (DHOCBF) with relative degree $m$ for system \eqref{eq:discrete-dynamics} if there exist $\psi_{m}(\mathbf{x})$ and $\mathcal {C}_{i}$ such that
\begin{equation}
\label{eq:highest-order-CBF}
\psi_{m}(\mathbf{x}_{t})\ge 0, \ \forall x_{t}\in \mathcal{C}_{0}\cap \dots \cap \mathcal {C}_{m-1}, t\in\mathbb{N}.
\end{equation}
\end{definition}

\begin{theorem}[Safety Guarantee \cite{xiong2022discrete}]
\label{thm:forward-invariance}
Given a DHOCBF $h(\mathbf{x})$ from Def. \ref{def:high-order-discrete-CBFs} with corresponding sets $\mathcal{C}_{0}, \dots,\mathcal {C}_{m-1}$ defined by \eqref{eq:high-order-safety-sets}, if $\mathbf{x}_{0} \in \mathcal {C}_{0}\cap \dots \cap \mathcal {C}_{m-1},$ then any Lipschitz controller $\mathbf{u}_{t}$ that satisfies the constraint in \eqref{eq:highest-order-CBF}, $\forall t\ge 0$ renders $\mathcal {C}_{0}\cap \dots \cap \mathcal {C}_{m-1}$ forward invariant for system \eqref{eq:discrete-dynamics}, $i.e., \mathbf{x}_{t} \in \mathcal {C}_{0}\cap \dots \cap \mathcal {C}_{m-1}, \forall t\ge 0.$
\end{theorem}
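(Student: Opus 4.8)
The plan is to prove forward invariance by a single induction on the time step $t$, after translating the set-membership statement into scalar inequalities on $\psi_0,\dots,\psi_{m-1}$ and propagating them one step forward with the defining recursion \eqref{eq:high-order-discrete-CBFs}. The base case $t=0$ is precisely the hypothesis $\mathbf{x}_0\in\mathcal{C}_0\cap\dots\cap\mathcal{C}_{m-1}$, so all the content lies in the induction step.

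For the induction step, assume $\mathbf{x}_t\in\mathcal{C}_0\cap\dots\cap\mathcal{C}_{m-1}$, i.e.\ $\psi_i(\mathbf{x}_t)\ge 0$ for $i=0,\dots,m-1$, and that the applied control $\mathbf{u}_t$ satisfies the DHOCBF constraint $\psi_m(\mathbf{x}_t)\ge 0$. The key algebraic observation is that rearranging \eqref{eq:high-order-discrete-CBFs} together with $\bigtriangleup\psi_{i-1}(\mathbf{x}_t)=\psi_{i-1}(\mathbf{x}_{t+1})-\psi_{i-1}(\mathbf{x}_t)$ gives, for every $i\in\{1,\dots,m\}$,
\begin{equation*}
\psi_{i-1}(\mathbf{x}_{t+1}) = \psi_i(\mathbf{x}_t) + \bigl(\psi_{i-1}(\mathbf{x}_t) - \alpha_i(\psi_{i-1}(\mathbf{x}_t))\bigr).
\end{equation*}
By the stated property $\alpha_i(\psi_{i-1}(\mathbf{x}_t))\le \psi_{i-1}(\mathbf{x}_t)$ of the class $\kappa$ functions, the parenthesized term is nonnegative. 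The first term $\psi_i(\mathbf{x}_t)$ is nonnegative as well: for $i\in\{1,\dots,m-1\}$ this is the induction hypothesis $\mathbf{x}_t\in\mathcal{C}_i$, and for $i=m$ it is exactly the constraint \eqref{eq:highest-order-CBF} enforced by $\mathbf{u}_t$. Hence $\psi_{i-1}(\mathbf{x}_{t+1})\ge 0$ for all $i\in\{1,\dots,m\}$, i.e.\ $\mathbf{x}_{t+1}\in\mathcal{C}_{i-1}$ for each such $i$, so $\mathbf{x}_{t+1}\in\mathcal{C}_0\cap\dots\cap\mathcal{C}_{m-1}$. This closes the induction and yields $\mathbf{x}_t\in\mathcal{C}_0\cap\dots\cap\mathcal{C}_{m-1}$ for all $t\ge 0$.

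There is no serious analytic obstacle; the argument is essentially telescoping bookkeeping. The one point that needs care is index alignment: advancing $\psi_{i-1}$ by one time step consumes the nonnegativity of $\psi_i$ \emph{at the current step}, so the top function $\psi_m$ (which has no associated set $\mathcal{C}_m$) is precisely the one that must be supplied by the controller, while $\psi_0,\dots,\psi_{m-1}$ are supplied by the invariant being maintained; recognizing this is what lets a single induction over $t$ replace the nested induction one might first attempt. I would also note that the Lipschitz assumptions on $f$ and on $\mathbf{u}_t$ are not used in the inequalities themselves — they only guarantee that the closed-loop trajectory $\{\mathbf{x}_t\}$ of \eqref{eq:discrete-dynamics} is well defined so the claim is meaningful — and that Def.~\ref{def:high-order-discrete-CBFs} is exactly what ensures a control rendering $\psi_m(\mathbf{x}_t)\ge 0$ exists on the intersection, so the hypothesis is non-vacuous.
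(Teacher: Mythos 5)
Your proof is correct: the rearrangement $\psi_{i-1}(\mathbf{x}_{t+1})=\psi_i(\mathbf{x}_t)+\bigl(\psi_{i-1}(\mathbf{x}_t)-\alpha_i(\psi_{i-1}(\mathbf{x}_t))\bigr)$ together with a single induction on $t$ is exactly the standard argument for this result, which the paper itself imports from \cite{xiong2022discrete} without proof. It also parallels the paper's own proof of its Theorem~2 (the relaxed variant), so there is nothing to flag.
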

We can simply define an $i^{th}$ order DCBF $\psi_{i}(\mathbf{x})$ in \eqref{eq:high-order-discrete-CBFs} as
\begin{equation}
\label{eq:simple-high-order-discrete-CBFs}
\psi_{i}(\mathbf{x}_{t})\coloneqq \bigtriangleup \psi_{i-1}(\mathbf{x}_{t})+\gamma_{i}\psi_{i-1}(\mathbf{x}_{t}),
\end{equation}
where $\alpha(\cdot)$ is defined linear and $0<\gamma_{i}\le 1, i\in \{1,\dots,m\}$.
The above DCBF is typically incorporated as a safety constraint into the following safety-critical optimal control problem:

\begin{subequations}
\label{eq:classical ocp}
\begin{align}
J(\mathbf{u}_{t}, &\mathbf{x}_{t+1})=\min_{\mathbf{u}_{t}} \mathbf{u}_{t}^{T}\mathbf{u}_{t}+\sum_{k=t}^{t+1}(\mathbf{x}_{k}-\mathbf{x}_{T})^{\top}(\mathbf{x}_{k}-\mathbf{x}_{T})\label{subeq:obj}\\
s.t. \ \ 
&\psi_{m-1}(\mathbf{x}_{t+1}) \geq (1 - \gamma_{m}) \psi_{m-1}(\mathbf{x}_{t}), ~0<\gamma_{m}\le1,\label{subeq:dcbf1}\\
&\mathbf{u}_{t}\in \mathcal U \subset \mathbb{R}^{q},~\mathbf{x}_{t+1} \in \mathcal X \subset \mathbb{R}^{n}\label{subeq:control and state limits},
\end{align}
\end{subequations}
where \eqref{subeq:obj} specifies the objective function of the optimization problem, $\mathbf{x}_{T}$ denotes the target state, and $t \in \mathbb{N}$. At each time step $t$, the current state $\mathbf{x}_t$ is given. Constraint \eqref{subeq:dcbf1} enforces that the system state $\mathbf{x}_{t+1}$ in \eqref{eq:discrete-dynamics} remains within the safe set $\mathcal{C}$ defined in \eqref{eq:safe-set} (Thm. \ref{thm:forward-invariance}). Constraint \eqref{subeq:control and state limits} imposes bounds on both the control input and the state; however, these bounds may conflict with the safety constraint \eqref{subeq:dcbf1}, potentially leading to infeasibility.

\section{Problem Formulation and Approach}
\label{sec:Problem Formulation and Approach}
\begin{figure}
\vspace*{3mm}
    \centering
\includegraphics[scale=0.3]{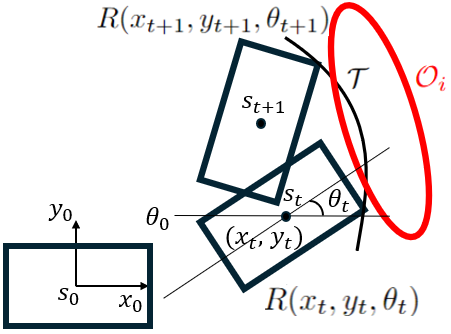}
    \caption{Rigid-body transition of the robot (black) between two discrete time steps with respect to an obstacle (red). $(x_t, y_t)$ denotes the location, and $\theta_{t}$ denotes the orientation of the robot at $t$. $\mathcal{T}$ denotes the continuous trajectory of a boundary point on the robot between time steps $t$ and $t+1$.}
    \label{fig:Robot}
\end{figure}

We consider a 2D car-like robot with arbitrary, possibly nonconvex geometry navigating through a cluttered environment. The robot must reach a specified target while ensuring safety and adhering to its dynamics and input constraints. The environment contains $K$ static obstacles $\mathcal{O}_1, \dots, \mathcal{O}_K \subset \mathbb{R}^2$, each with known, fixed boundaries. Let $\mathcal{O} \coloneqq \bigcup_{i=1}^K \mathcal{O}_i$ denote the union of all obstacles. Let $\mathcal{R}(x_t, y_t, \theta_t) \subset \mathbb{R}^2$ denote the robot at time $t$, 
where $(x_t, y_t)$ is the position of a reference point of the robot and $\theta_t$ is its orientation with respect to a reference frame (see Fig.~\ref{fig:Robot}). 
The (safety) requirement that the robot avoids all obstacles at all times can be written as:
\begin{equation}
\mathcal{R}(x_t, y_t, \theta_t) \cap \mathcal{O} = \emptyset, ~ \forall t\ge 0.
\end{equation}


We assume that the dynamics of the robot are described by \eqref{eq:discrete-dynamics}, where the system state $\mathbf{x}_t$ includes the robot’s location $(x_t, y_t)$, orientation $\theta_t$, and potentially additional variables such as speed. The control input is denoted by $\mathbf{u}_t$. The obstacles correspond to configuration-space (C-space) obstacles, i.e., the images of workspace obstacles mapped into the robot’s configuration space. \footnote{Although we focus on a 2D setting in this work, the proposed method is general and can be extended to higher-dimensional systems such as aerial drones or robotic manipulators.}

In this paper, we consider the following problem:

\begin{problem}
\label{prob:Path-prob}
For a robot with given geometry $\mathcal{R}$  and dynamics~\eqref{eq:discrete-dynamics} moving in an environment with obstacles  $\mathcal{O}_i$, $i=1,\ldots,K$, design a control policy that steers the reference point $(x_t, y_t)$ from an initial location $(x_0, y_0)$ to a desired region centered at the target location $(x_T, y_T)$ in finite time, while avoiding collisions, satisfying the control and state constraints in \eqref{subeq:control and state limits}, and minimizing its control effort.
\end{problem}


When considering Problem~\ref{prob:Path-prob}, we typically need to compute the clearance between the robot and obstacles to ensure it remains nonnegative. However, obtaining an analytical expression for the clearance is generally infeasible for robots and obstacles with complex geometries. To enable numerical computation, their continuous boundaries are approximated by sampled points, and the clearance is typically computed via exhaustive search over these samples. This process becomes computationally expensive when the boundary samples are overly dense. Furthermore, estimating the clearance for future configurations—such as $(x_{t+1}, y_{t+1}, \theta_{t+1})$—which are unknown at time step $t$, is intractable through brute-force search. In addition, enforcing nonnegative clearance only at discrete time steps, e.g., $\mathcal{R}(x_t, y_t, \theta_t) \cap \mathcal{O}_{i} = \emptyset, \mathcal{R}(x_{t+1}, y_{t+1}, \theta_{t+1}) \cap \mathcal{O}_{i} = \emptyset$ as illustrated in Fig.~\ref{fig:Robot}, does not prevent the transitional trajectory $\mathcal{T}$ of the robot between $t$ and $t+1$ from intersecting the obstacle—especially in the presence of complex shapes. To overcome these challenges, our approach is as follows.

\textbf{Approach:} We sample the boundaries of the robot and the union of obstacles using sufficiently dense boundary points. Based on these samples, we collect a large and diverse set of robot configurations along with their corresponding ground-truth clearances—computed via exhaustive search—to train a residual neural network that accurately and efficiently predicts the robot–obstacle clearance. Even when the robot's configuration is represented as an optimization variable at a future time step, the network can output a clearance prediction that serves as a decision variable to the optimization problem. This predicted clearance defines the radius of a reference ball, which is anchored to an extreme boundary point of the robot using the Extreme Point Anchor (EPA) method, forming the Local Safety Ball (LSB). The LSB ensures continuous-time collision avoidance. Its boundary is encoded as a Discrete-Time High-Order Control Barrier Function (DHOCBF) constraint with respect to the robot's reference point. Combined with state and input constraints, these DHOCBFs define a nonlinear optimization problem with the cost defined by Eq. \eqref{subeq:obj}, illustrated in Fig.~\ref{fig:ControlFrame}, whose feasibility is enhanced by a relaxation technique. The target location is included in the cost function as a reference state to minimize the deviation from the current location. This framework guarantees that the robot’s entire rigid-body motion between consecutive time steps remains collision-free. The resulting controller balances target-reaching performance with obstacle avoidance and is applied iteratively at each time step until the desired region is reached.
\begin{figure}
\vspace*{3mm}
    \centering
    \includegraphics[scale=0.26]{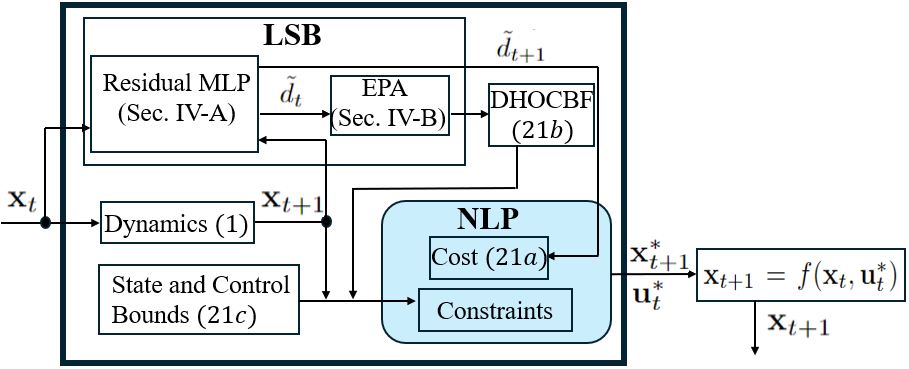}
    \caption{Schematic of the control computation at time $t$. The MLP predicts the clearance, which is passed to the EPA to define the radius of the LSB. The LSB's boundary is then converted into DHOCBF constraints. At each time step, an NLP with a quadratic cost and nonlinear constraints is solved.}
    \label{fig:ControlFrame}
\end{figure}

\section{From Learned Clearance to Safe Control: Constructing and Integrating Local Safety Balls}
\label{sec:LSBs and Control}
\subsection{Learning Clearance: Residual MLP Design}
\label{subsec: Residual MLP}
\subsubsection{Geometric Modeling}
Let $\{\mathcal{O}_k\}_{k=1}^K$ denote the workspace obstacles.
Each obstacle boundary is discretized into samples
$\partial \mathcal{O}_k=\{o_{k,m}\in\mathbb{R}^2\}_{m=1}^{M_k}$ and we define the
global boundary-sample set
$\partial \mathcal{O} \coloneqq \bigcup_{k=1}^K \partial \mathcal{O}_k$.
The robot has a nontrivial shape with boundary $\partial\mathcal{R}$.
We choose a reference point $s=[x,y]^{\top}\in\mathbb{R}^2$ whose
motion is governed by the dynamic model, and an orientation $\theta\in(-\pi,\pi]$.
For configuration $(x,y,\theta)$, the robot's workspace boundary is
\begin{equation}
\label{eq:matrixTrans}
\partial\mathcal{R}(x, y, \theta) = s \oplus \left\{ \mathbf{R}(\theta)\,r_0 \mid r_0 \in \partial\mathcal{R}_0 \right\},
\end{equation}
where $\oplus$ denotes the Minkowski sum, $\mathbf{R}(\theta)$ is a 2D rotation matrix, and $\partial\mathcal{R}_0$ denotes the robot boundary defined in the reference frame located at the origin with zero orientation. The robot–obstacle clearance $d$ is calculated as:
\begin{equation}
\label{eq:clearance}
d(x,y,\theta) =
\min_{r\in \mathcal \partial R(x,y,\theta)}\;\min_{o\in  \partial \mathcal O} \|r-o\|_2,
\end{equation}
where $r$ and $o$ are samples from the boundaries of the robot and obstacle, respectively.
If $\partial\mathcal{R}(x,y,\theta)$ intersects $\mathcal{O}$, the clearance is defined as negative, equal to the maximum Euclidean distance from the robot boundary samples inside $\mathcal{O}$ to its boundary. 
\subsubsection{Ground-Truth Clearance}
Given $(x,y,\theta)$, the clearance is defined by \eqref{eq:clearance}.
To populate the dataset, we uniformly (or according to a coverage
strategy) sample locations $s_i=[x_i,y_i]^{\top}$ across the map and discretize
orientations $\theta_j\in\Theta=\{\theta_1,\dots,\theta_{N_\theta}\}$.
For each $(x_i,y_i,\theta_j)$ we compute $d_{ij}$ from
\eqref{eq:clearance}. The resulting dataset is
$\mathcal{D}=\{(\mathbf{x}_{ij}, d_{ij})\}$ with
$\mathbf{x}_{ij}=[x_i,y_i,\theta_j]^{\top}$
.

\subsubsection{Network and Target Normalization}
We train a fully connected residual network \cite{he2016deep} $f_\phi:\mathbb{R}^3\!\to\!\mathbb{R}$ as an MLP
to regress the clearance. Inputs are standardized feature-wise to
$\bar{\mathbf{x}}=(\mathbf{x}-\mu_x)\oslash\sigma_x$ ($\oslash$ denotes elementwise division, $\mu_x,\sigma_x$ are the mean and standard deviation of the input features). To reduce target
scale imbalance, we apply a log transform:
\begin{equation}
\label{eq:target_norm}
\bar d \;=\; \frac{\log(d+\varepsilon)-\mu_{\log}}{\sigma_{\log}},
\quad \varepsilon>|d_{\text{min}}|,
\end{equation}
where $\mu_{\log},\sigma_{\log}$ denote the mean and standard deviation of 
the log-transformed distances. The training objective is mean squared error in the normalized space:
\begin{equation}
\label{eq:mse_loss}
\mathcal{L}(\phi)
= \frac{1}{|\mathcal{D}|}\sum_{(\mathbf{x},d)\in\mathcal{D}}
\big\| f_\phi(\bar{\mathbf{x}}) - \bar d \big\|_2^2 .
\end{equation}
At inference, we recover metric distances via
\begin{equation}
\label{eq:denorm}
\tilde d \;=\; \exp\!\big(\, f_\theta(\bar{\mathbf{x}})\,\sigma_{\log}+\mu_{\log}\big) - \varepsilon.
\end{equation}

\subsubsection{Architecture}
\label{sebsebsec:MLP Arch}
The network comprises three input-processing layers expanding
$3 \!\to\! h/4 \!\to\! h/2 \!\to\! h$ (with BatchNorm \cite{ioffe2015batch} and GELU \cite{hendrycks2016gaussian}), a stack of
$L_b\!=\!6$ residual blocks of width $h\!=\!2048$, and a three-layer head
reducing $h \!\to\! h/2 \!\to\! h/4 \!\to\! 1$ as shown in Fig. \ref{fig:ResNet_Design}.
Besides the standard identity skip inside each block, we add non-adjacent
skips every two blocks to promote long-range gradient flow:
if $\mathbf{z}_i$ is the output of block $i$,
\begin{equation}
\label{eq:nonadj-skip}
\mathbf{z}_{i+2} \;\leftarrow\; \mathbf{z}_{i+2} + \mathbf{A}_i\,\mathbf{z}_i,
\quad \mathbf{A}_i\in\mathbb{R}^{h\times h},\;\; i\in\{1,3\}.
\end{equation}

\subsubsection{Training Details}
Linear layers use Kaiming normal initialization \cite{he2015delving}. BatchNorm scale/bias are
initialized to $1/0$. We optimize with AdamW \cite{loshchilov2017decoupled} (base learning rate $10^{-4}$) and a
cyclic scheduler between $10^{-4}$ and $10^{-3}$ (halving amplitude each cycle).
We apply gradient clipping at $1.0$ and early stopping on validation loss.

\subsubsection{Evaluation}
On a held-out test set, we report mean squared error in the original
distance scale:
\begin{equation}
\label{eq: MSE}
\text{MSE}_{\text{test}}
= \frac{1}{N_{\theta}}\sum_{n=1}^{N_{\theta}}
\big(\tilde d(\mathbf{x}^{\text{test}}_n) - d^{\text{test}}_n\big)^2,
\end{equation}
where $N_{\theta}$ denotes the number of orientation samples for each location $(x,y)$ (e.g., $N_{\theta}=360$ when one sample is taken per degree).
\begin{remark}[\textit{DNN Prediction Errors and Certified Bounds}]
Our MLP does not provide strict a priori error bounds. For low-dimensional inputs (e.g., 2D states), techniques such as Lipschitz certificates \cite{tsuzuku2018lipschitz} could, in principle, yield worst-case guarantees. Incorporating such bounds will be pursued as future work.
\end{remark}
\begin{figure}
\vspace*{3mm}
    \centering
    \includegraphics[scale=0.5]{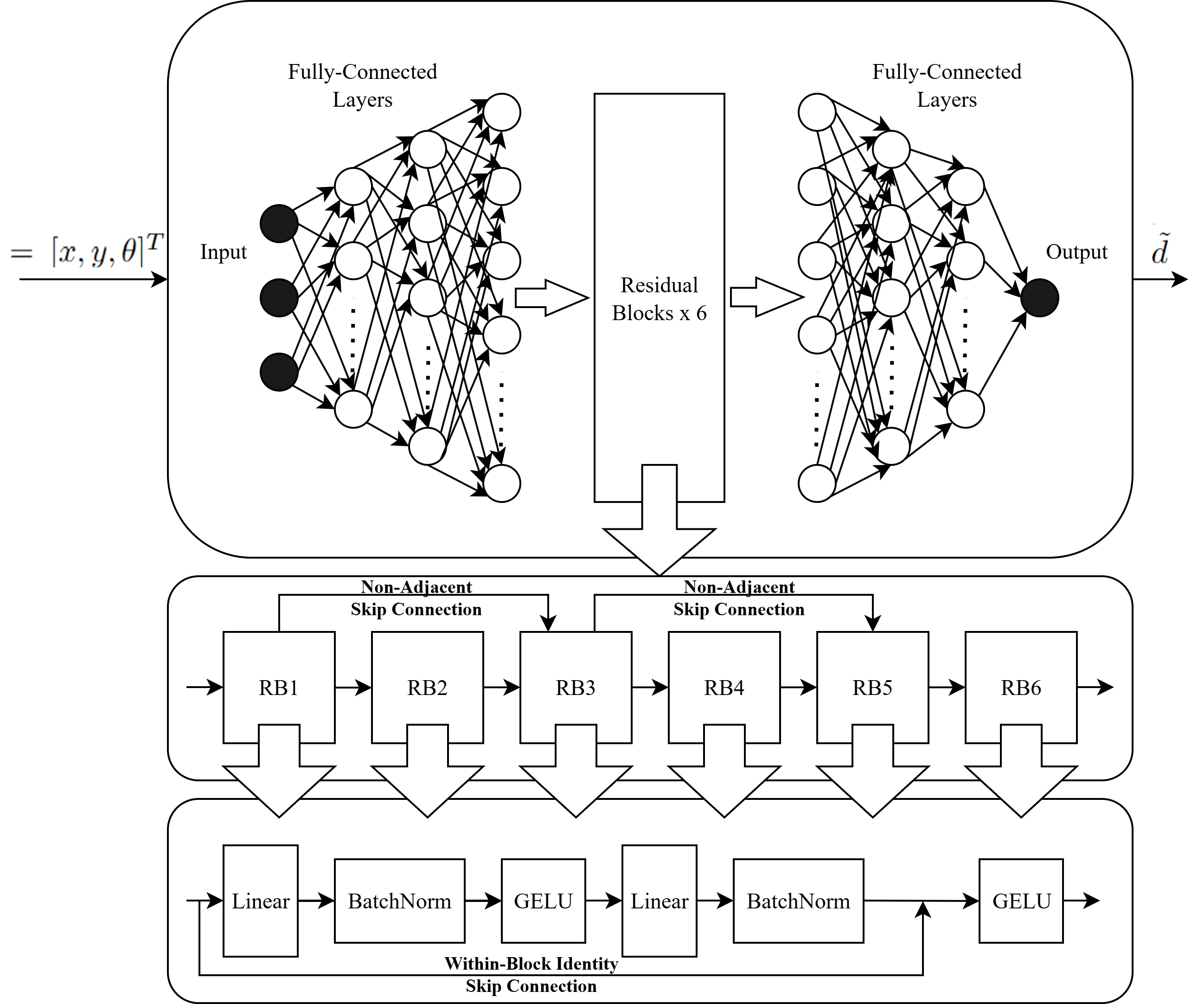}
    \caption{System identification of the residual MLP, a deep neural network $f_\phi(\mathbf{x})$ trained on map data. Given the robot’s state $(x, y, \theta)$, the MLP outputs the predicted clearance $\tilde{d}$ between the boundaries of the robot and the obstacle.}
    \label{fig:ResNet_Design}
\end{figure}

\subsection{EPA: Shape-Aware Local Safety Ball}
\label{subsec: EPA}
\begin{definition}[Extreme Point Anchor]
\label{def:EPA}
Given a robot configuration $(x,y,\theta)$, let 
$\partial\mathcal{R}(x,y,\theta)$ denote the robot boundary and 
$\partial\mathcal{O}$ the obstacle boundary. 
If the clearance $d$ is defined by \eqref{eq:clearance}
and let $(r_p,o_p)$ denote a pair of boundary points attaining the 
clearance in \eqref{eq:clearance}. 
The \emph{reference ball} is the closed ball $\mathbb{B}(r_p,d)$ centered 
at $r_p$ with radius $d$. 
Let $s$ be the reference point of the robot and define
\begin{equation}
\label{eq:center of rc}
r_c \;\in\; \arg\max_{r\in\partial\mathcal{R}(x,y,\theta)} \|r-s\|_2,
\end{equation}
i.e., the boundary point farthest from $s$ (ties are broken arbitrarily). 
The \emph{translated ball} is then given by $\mathbb{B}(r_c,d)$, i.e., the 
ball of radius $d$ centered at $r_c$. 
This construction is referred to as the Extreme Point Anchor (EPA), 
as illustrated in Fig.~\ref{fig:RobObs}. 
\end{definition}

\begin{lemma}[Ball Containment under Bounded Rigid Motion]
\label{lem:bounded move}
Let $d$ be the clearance in \eqref{eq:clearance}. Let $\mathcal{T}\in SE(2)$ be the one-step rigid motion of the robot boundary, 
and let $\{\mathcal{T}_\tau\}_{\tau\in[0,1]}$ denote a continuous transition
from the identity motion ($\tau=0$) to $\mathcal{T}$ ($\tau=1$). For each boundary point $r\in\partial\mathcal{R}$ define $r(\tau):=\mathcal{T}_\tau(r)$.
Assume the step is $d$-bounded that
\begin{equation}\label{eq:d-bounded}
\sup_{\tau\in[0,1]}\ \sup_{r\in\partial\mathcal{R}}\ \|r(\tau)-r\|_2 \;\le\; d.
\end{equation}
Then, for every $r\in\partial\mathcal{R}$ and every $\tau\in[0,1]$, we have $r(\tau)\in\mathbb{B}(r,d)$ and, in particular, $r'=\mathcal{T}(r)\in\mathbb{B}(r,d)$. Consequently, since each ball $\mathbb{B}(r,d)$ contains no obstacle boundary points by \eqref{eq:clearance}, the step is continuous-time collision-free.
\end{lemma}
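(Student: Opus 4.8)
The plan is to prove the three assertions in sequence, since they build on one another, with essentially all of the work in upgrading ``no obstacle boundary point lies in any ball'' to a genuine continuous-time non-intersection statement for the solid bodies. The ball-containment claims are just a reading of the hypothesis: for any $r\in\partial\mathcal{R}$ and $\tau\in[0,1]$, the $d$-bounded condition \eqref{eq:d-bounded} gives $\|r(\tau)-r\|_2\le d$, which is by definition $r(\tau)\in\mathbb{B}(r,d)$, and taking $\tau=1$ specializes this to $r'=\mathcal{T}(r)=r(1)\in\mathbb{B}(r,d)$.

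Next I would record that \eqref{eq:clearance} says precisely $\|r-o\|_2\ge d$ for every $r\in\partial\mathcal{R}$ and $o\in\partial\mathcal{O}$, so no obstacle-boundary point lies in the open ball about any $r$, and hence none lies in $\bigcup_{r\in\partial\mathcal{R}}\mathbb{B}(r,d)$. Combined with the first step, the swept robot boundary $\bigcup_{\tau\in[0,1]}\partial\mathcal{R}(\tau)$ sits inside that union and is therefore disjoint from $\partial\mathcal{O}$ for every $\tau$.

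The step needing the most care is promoting ``the robot boundary never meets the obstacle boundary'' to ``the robot region never meets the obstacle region,'' via a first-contact/connectedness argument that uses the standing invariant that the robot starts safe, $\mathcal{R}(0)\cap\mathcal{O}=\emptyset$. Suppose a collision occurs and let $\tau^*=\inf\{\tau:\mathcal{R}(\tau)\cap\mathcal{O}\neq\emptyset\}$; compactness of the bodies and continuity of $\{\mathcal{T}_\tau\}$ make this infimum attained with $\tau^*>0$. For a contact point $p\in\mathcal{R}(\tau^*)\cap\mathcal{O}$, I would argue $p$ lies on both boundaries: if $p$ were interior to $\mathcal{O}$, the robot material point through $p$ would already be inside $\mathcal{O}$ just before $\tau^*$ by continuity of the motion; if $p$ were interior to $\mathcal{R}(\tau^*)$, then by uniform continuity of the rigid motion a neighborhood of $p$ --- hence $p$ itself, which lies in $\mathcal{O}$ --- would be covered by $\mathcal{R}(\tau)$ just before $\tau^*$; both contradict minimality of $\tau^*$. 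Thus $p\in\partial\mathcal{R}(\tau^*)\cap\partial\mathcal{O}$, contradicting the disjointness from the second step, so $\mathcal{R}(\tau)\cap\mathcal{O}=\emptyset$ for all $\tau\in[0,1]$.

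The main obstacle is precisely this last step: the ball bound only constrains the robot boundary, so one must separately exclude the robot ``swallowing'' an obstacle without its boundary ever touching $\partial\mathcal{O}$, which the first-contact argument handles. One minor technical point I would flag is that \eqref{eq:clearance} only forces $\|r-o\|_2\ge d$, so the closed ball $\mathbb{B}(r,d)$ may carry obstacle points on its bounding sphere and a tangential touch is not ruled out under the non-strict hypothesis \eqref{eq:d-bounded}; if strict collision-freeness is wanted one takes \eqref{eq:d-bounded} strict, or invokes the positive margin from the finite, sufficiently dense boundary sampling, and states the lemma accordingly.
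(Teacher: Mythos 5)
Your proof is correct, and its first two steps are exactly the paper's entire proof: the ball-containment claims are read off directly from \eqref{eq:d-bounded}, and the clearance definition \eqref{eq:clearance} keeps obstacle boundary points out of the balls, so the swept robot boundary never meets $\partial\mathcal{O}$. Where you genuinely diverge is in refusing to stop there: the paper declares the motion ``safe'' as soon as the boundaries are shown disjoint, whereas the stated safety notion is $\mathcal{R}\cap\mathcal{O}=\emptyset$ for the solid regions, and boundary-disjointness alone does not preclude one body engulfing the other. Your first-contact argument --- take $\tau^*=\inf\{\tau:\mathcal{R}(\tau)\cap\mathcal{O}\neq\emptyset\}$, use continuity of the rigid motion and the initial invariant $\mathcal{R}(0)\cap\mathcal{O}=\emptyset$ to force any first contact point onto $\partial\mathcal{R}(\tau^*)\cap\partial\mathcal{O}$, then contradict the boundary-disjointness --- is precisely the missing bridge, and it is the right tool because a rigid motion is a continuous isotopy. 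Your second caveat is also well taken: since \eqref{eq:clearance} is attained by some pair $(r_p,o_p)$ with $\|r_p-o_p\|_2=d$ and $\mathbb{B}(r,d)$ is defined as a \emph{closed} ball, the paper's claim that these balls ``contain no obstacle boundary points'' is only true of the open balls, so grazing contact survives the non-strict hypothesis \eqref{eq:d-bounded}; one either tightens \eqref{eq:d-bounded} to a strict inequality or accepts tangential touching as non-collision. In short, your argument buys a rigorous region-level (not merely boundary-level) continuous-time guarantee at the cost of one extra topological paragraph and an explicit initial-safety hypothesis; the paper's version buys brevity by leaving both of these points implicit.
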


\begin{proof}
Fix $r\in\partial\mathcal{R}$. By \eqref{eq:d-bounded}, for all $\tau\in[0,1]$,
$\|r(\tau)-r\|\le d$, which is exactly the statement that $r(\tau)\in\mathbb{B}(r,d)$.
Because $d$ is the clearance to the obstacle boundary, $\mathbb{B}(r,d)$ contains no obstacle boundary points for any $r$. Hence no boundary trajectory $r(\tau)$ can intersect $\partial\mathcal{O}$ during the step, and the motion is safe.
\end{proof}

In Lemma. \ref{lem:bounded move}, the double supremum in \eqref{eq:d-bounded} corresponds to the maximum location change among all boundary points over the step, which in 
practice is attained by the boundary point $r_c$ that is farthest from 
the reference point $s$ as defined in 
Def.~\ref{def:EPA}. i.e., $
\|r_c(\tau)-r_c\|_2 \;=\; 
\sup_{r\in\partial\mathcal{R}} \|r(\tau)-r\|_2$. An advantage of using the EPA is that $r_c$ can be directly 
determined from the reference point $s$ and the robot shape, without 
requiring the closest boundary pair $(r_p,o_p)$. Another advantage, formalized in Lemma~\ref{lem:bounded move} (e.g., Eq.~\eqref{eq:d-bounded}), is that by constraining $r_c(\tau)$ to remain within its translated ball, all other boundary points are guaranteed to remain within their corresponding balls throughout the entire rigid-body transition, thereby reducing the number of constraints. 

\subsection{DHOCBF-Constrained NLP Controller}
\label{subsec: NLP}
When we obtain the translated ball, we need to consider how to use it to restrict the movement of the robot’s reference point $s$, so that the robot’s trejectory remains collision-free. According to Lemma 1, we must constrain $r(\tau)$ to lie within the ball centered at $r_c$ with radius $d$. We define a DHOCBF 
\begin{equation}
\label{eq:LSB-DHOCBF1}
\tilde{\psi}_{0}(\mathbf{x}_{t}|\tilde{d}_t,r_{c,t})\coloneqq \tilde{h}(\mathbf{x}_{t}|\tilde{d}_t,r_{c,t})= \tilde{d}_t-\|r_{c}(\mathbf{x}_{t})-r_{c,t}\|_2 , 
\end{equation}
where $\tilde{d}_t=f_{\phi}(x_{t},y_{t},\theta_{t})$ is obtained at $t$ from the MLP and $r_c(\mathbf{x}_t)=r_{c,t}$ denotes the center of the translated ball at $t$ computed according to \eqref{eq:center of rc}. Since the LSB constructed based on $\tilde{d}_t$ and $r_{c,t}$ often defines a local safety region that is smaller than the actual global safety region, strictly enforcing the DHOCBF based on local safety region tends to shrink the feasible set and increases the risk of infeasibility. To address this, we introduce slack variables to relax the DHOCBF constraints while still ensuring obstacle avoidance.
\begin{theorem}
\label{thm:relaxed forward-invariance}
Given a DHOCBF $\tilde{\psi}_{0}(\mathbf{x},\Omega|\tilde{d}_t,r_{c,t})=\tilde{h}(\mathbf{x}|\tilde{d}_t,r_{c,t})$ for system \eqref{eq:discrete-dynamics} with corresponding relaxed functions $\tilde{\psi}_{i}(\mathbf{x},\Omega|\tilde{d}_t,r_{c,t}), ~i\in\{1,...,m\}$ defined by
\begin{equation}
\label{eq:new DHOCBF}
\begin{split}
\tilde{\psi}_{i}(\mathbf{x}_{t},\Omega_{t}|\tilde{d}_t,r_{c,t})\coloneqq  \tilde{\psi}_{i-1}(\mathbf{x}_{t+1},\Omega_{t}|\tilde{d}_t,r_{c,t})+ \\
\omega_{i,t}(\gamma_{i}-1)\tilde{\psi}_{i-1}(\mathbf{x}_{t},\Omega_{t}|\tilde{d}_t,r_{c,t}),
\end{split}
\end{equation}
where $0<\gamma_{i}\le 1$ and $\Omega=[\omega_{1},...,\omega_{m}]^{\top }$ is a slack variable vector. If $\tilde{\psi}_{0}(\mathbf{x}_{0},\Omega_0|\tilde{d}_0,r_{c,0})\ge 0$ and $\omega_{i}\ge0$, then any Lipschitz controller $\mathbf{u}_{t}$ that satisfies the constraints $\tilde{\psi}_{i}(\mathbf{x}_{t},\Omega_{t}|\tilde{d}_t,r_{c,t})\ge 0,~i\in\{1,...,m\}$ renders $\tilde{h}(\mathbf{x}_{t+m}|\tilde{d}_t,r_{c,t})\ge 0, \forall t\ge 0$ and obstacle avoidance is guaranteed.
\end{theorem}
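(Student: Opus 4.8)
The plan is to replay the forward-invariance mechanism behind Theorem~\ref{thm:forward-invariance}, but propagated \emph{along the trajectory} rather than across an intersection of sets, while tracking how the nonnegative slacks $\omega_{i,t}$ interact with the coefficients $\gamma_i$. The single observation that makes the relaxed recursion tractable is that the anchor choice $r_{c,t}=r_c(\mathbf{x}_t)$ forces
\[
\tilde{\psi}_0(\mathbf{x}_t,\Omega_t\mid\tilde d_t,r_{c,t})=\tilde h(\mathbf{x}_t\mid\tilde d_t,r_{c,t})=\tilde d_t-\|r_c(\mathbf{x}_t)-r_{c,t}\|_2=\tilde d_t\ge 0
\]
at \emph{every} time $t$, not just at $t=0$: the hypothesis $\tilde{\psi}_0(\mathbf{x}_0,\Omega_0\mid\tilde d_0,r_{c,0})\ge 0$ is exactly $\tilde d_0\ge 0$, and $\tilde d_t\ge 0$ persists along any collision-free trajectory provided the clearance predictor is accurate enough. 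Together with the imposed constraints, this yields $\tilde{\psi}_i(\mathbf{x}_t,\Omega_t\mid\tilde d_t,r_{c,t})\ge 0$ for the whole block $i\in\{0,1,\dots,m\}$.

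Next I would isolate the elementary propagation step. Rearranging \eqref{eq:new DHOCBF} (I suppress the conditioning on $\tilde d_t,r_{c,t}$) gives, for any state index $s$ and any $j\in\{0,\dots,m-1\}$,
\[
\tilde{\psi}_j(\mathbf{x}_{s+1},\Omega_t)=\tilde{\psi}_{j+1}(\mathbf{x}_{s},\Omega_t)+\omega_{j+1,t}(1-\gamma_{j+1})\,\tilde{\psi}_j(\mathbf{x}_{s},\Omega_t).
\]
Since $0<\gamma_{j+1}\le 1$ and $\omega_{j+1,t}\ge 0$, the coefficient $\omega_{j+1,t}(1-\gamma_{j+1})$ is nonnegative; hence if $\tilde{\psi}_{j+1}(\mathbf{x}_{s},\Omega_t)\ge 0$ and $\tilde{\psi}_{j}(\mathbf{x}_{s},\Omega_t)\ge 0$, then $\tilde{\psi}_j(\mathbf{x}_{s+1},\Omega_t)\ge 0$. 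Informally, advancing one time step consumes one order of the hierarchy but preserves the sign — the relaxed counterpart of the monotonicity in Theorem~\ref{thm:forward-invariance}.

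I would then induct on the horizon offset $k$ with the claim $P(k)$: ``$\tilde{\psi}_i(\mathbf{x}_{t+k},\Omega_t\mid\tilde d_t,r_{c,t})\ge 0$ for all $i\in\{0,\dots,m-k\}$.'' The base case $P(0)$ is the block established above. For $P(k)\Rightarrow P(k+1)$, apply the propagation identity at $s=t+k$ for each $j\in\{0,\dots,m-k-1\}$: the indices $j$ and $j+1$ are both $\le m-k$, so $\tilde{\psi}_{j+1}(\mathbf{x}_{t+k},\cdot)\ge 0$ and $\tilde{\psi}_j(\mathbf{x}_{t+k},\cdot)\ge 0$ by $P(k)$, hence $\tilde{\psi}_j(\mathbf{x}_{t+k+1},\cdot)\ge 0$. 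Taking $k=m$ gives $\tilde h(\mathbf{x}_{t+m}\mid\tilde d_t,r_{c,t})=\tilde{\psi}_0(\mathbf{x}_{t+m},\Omega_t\mid\tilde d_t,r_{c,t})\ge 0$; moreover $P(1),\dots,P(m)$ give $\|r_c(\mathbf{x}_{t+k})-r_{c,t}\|_2\le\tilde d_t$ for all $k\in\{1,\dots,m\}$, i.e.\ the anchored extremal point stays inside the LSB $\mathbb{B}(r_{c,t},\tilde d_t)$ at each of these steps.

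For the ``obstacle avoidance is guaranteed'' clause I would invoke Lemma~\ref{lem:bounded move} together with the EPA property of Definition~\ref{def:EPA}: since $r_c$ realizes the largest boundary displacement over a rigid step, bounding $\|r_c(\cdot)-r_{c,t}\|_2$ by the clearance makes condition \eqref{eq:d-bounded} hold, so every boundary trajectory stays in an obstacle-free ball and the $[t,t+1]$ transition is continuous-time collision-free; iterating over $t$ closes the argument. I expect the one genuinely delicate point to be exactly this bridge: the DHOCBF only certifies the \emph{discrete} inclusions $r_c(\mathbf{x}_{t+k})\in\mathbb{B}(r_{c,t},\tilde d_t)$, whereas \eqref{eq:d-bounded} is a \emph{continuous}-time statement over $\tau\in[0,1]$. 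I would close it using convexity of $\mathbb{B}(r_{c,t},\tilde d_t)$ and the EPA characterization of the extremal point, and I would flag — as the paper's prediction-error remark already does — that all of this presumes $\tilde d_t$ is a nonnegative, non-overestimating surrogate for the true clearance $d$ in \eqref{eq:clearance}. Once the conditioning observation of the first paragraph is in hand, everything else is bookkeeping.
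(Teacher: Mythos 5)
Your proposal is correct and follows essentially the same route as the paper's proof: the same telescoping double induction on the DHOCBF hierarchy (nonnegativity of orders $0,\dots,m$ at step $t$ propagating to order $0$ at step $t+m$), with the nonnegativity of the coefficient $\omega_{i,t}(1-\gamma_i)$ doing the work, followed by an appeal to the LSB geometry for collision avoidance. Your explicit induction hypothesis $P(k)$ is just a cleaner packaging of the paper's ``repeatedly do the above process,'' and your observation that the base case $i=0$ at time $t\ge 1$ reduces to $\tilde d_t\ge 0$ (i.e., re-anchoring requires the predicted clearance to stay nonnegative along a collision-free trajectory) is the same implicit step the paper takes when it passes from $\tilde{\psi}_0(\mathbf{x}_1,\Omega_0|\tilde d_0,r_{c,0})\ge 0$ to $\tilde{\psi}_0(\mathbf{x}_1,\Omega_1|\tilde d_1,r_{c,1})\ge 0$ — you are right to flag it, and your use of Lemma~\ref{lem:bounded move} for the discrete-to-continuous bridge matches the paper's intent.
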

\begin{proof}
At $t=0$, since $\tilde{\psi}_{0}(\mathbf{x}_{0},\Omega_0|\tilde{d}_0,r_{c,0})\ge 0$, based on \eqref{eq:new DHOCBF} and Def. \ref{def:relative-degree}, we have $\tilde{\psi}_{0}(\mathbf{x}_{1},\Omega_0|\tilde{d}_0,r_{c,0})\ge\omega_{1,0}(1-\gamma_{1})\tilde{\psi}_{0}(\mathbf{x}_{0}|\tilde{d}_0,r_{c,0})\ge0, ..., \tilde{\psi}_{m-1}(\mathbf{x}_{1},\mathbf{u}_{0},\Omega_0|\tilde{d}_0,r_{c,0})\ge\omega_{m,0}(1-\gamma_{m})\tilde{\psi}_{m-1}(\mathbf{x}_{0},\Omega_0|\tilde{d}_0,r_{c,0})\ge0$. Since 
$\tilde{\psi}_{1}(\mathbf{x}_{1},\Omega_0|\tilde{d}_0,r_{c,0})\ge 0\Leftrightarrow \tilde{\psi}_{0}(\mathbf{x}_{2},\Omega_0|\tilde{d}_0,r_{c,0})\ge\omega_{1,0}(1-\gamma_{1})\tilde{\psi}_{0}(\mathbf{x}_{1},\Omega_0|\tilde{d}_0,r_{c,0})\ge0$, repeatedly we have $\tilde{\psi}_{2}(\mathbf{x}_{1},\Omega_0|\tilde{d}_0,r_{c,0})\ge 0\Leftrightarrow \tilde{\psi}_{0}(\mathbf{x}_{3},\Omega_0|\tilde{d}_0,r_{c,0})\ge 0$, and so on up to $\tilde{\psi}_{m-1}(\mathbf{x}_{1},\mathbf{u}_{0},\Omega_0|\tilde{d}_0,r_{c,0})\ge 0\Leftrightarrow \tilde{\psi}_{0}(\mathbf{x}_{m},\Omega_0|\tilde{d}_0,r_{c,0})\ge 0$. At $t=1$, since $\tilde{\psi}_{0}(\mathbf{x}_{1},\Omega_0|\tilde{d}_0,r_{c,0})\ge 0$, we have $\tilde{\psi}_{0}(\mathbf{x}_{1},\Omega_1|\tilde{d}_1,r_{c,1})\ge 0$, based on \eqref{eq:new DHOCBF} and Def. \ref{def:relative-degree}, we have $\tilde{\psi}_{0}(\mathbf{x}_{2},\Omega_1|\tilde{d}_1,r_{c,1})\ge\omega_{1,1}(1-\gamma_{1})\tilde{\psi}_{0}(\mathbf{x}_{1}|\tilde{d}_1,r_{c,1})\ge0$, 
and so on up to $\tilde{\psi}_{m-1}(\mathbf{x}_{2},\mathbf{u}_{1},\Omega_1|\tilde{d}_1,r_{c,1})\ge\omega_{m,1}(1-\gamma_{m})\tilde{\psi}_{m-1}(\mathbf{x}_{1},\Omega_1|\tilde{d}_1,r_{c,1})\ge0$. Since 
$\tilde{\psi}_{1}(\mathbf{x}_{2},\Omega_1|\tilde{d}_1,r_{c,1})\ge 0\Leftrightarrow \tilde{\psi}_{0}(\mathbf{x}_{3},\Omega_1|\tilde{d}_1,r_{c,1})\ge\omega_{1,1}(1-\gamma_{1})\tilde{\psi}_{0}(\mathbf{x}_{2}|\tilde{d}_1,r_{c,1})\ge0$, repeatedly we have $\tilde{\psi}_{2}(\mathbf{x}_{2},\Omega_1|\tilde{d}_1,r_{c,1})\ge 0\Leftrightarrow \tilde{\psi}_{0}(\mathbf{x}_{4},\Omega_1|\tilde{d}_1,r_{c,1})\ge 0$, and so on up to $\tilde{\psi}_{m-1}(\mathbf{x}_{2},\mathbf{u}_{1},\Omega_1|\tilde{d}_1,r_{c,1})\ge 0\Leftrightarrow \tilde{\psi}_{0}(\mathbf{x}_{1+m},\Omega_1|\tilde{d}_1,r_{c,1})\ge 0$. Repearedly do above process we have $\tilde{\psi}_{i}(\mathbf{x}_{t+1},\Omega_{t}|\tilde{d}_t,r_{c,t})\ge 0,~i\in\{0,...,m-1\}$ renders $\tilde{\psi}_{0}(\mathbf{x}_{t+m},\Omega_t|\tilde{d}_t,r_{c,t})=\tilde{h}(\mathbf{x}_{t+m}|\tilde{d}_t,r_{c,t})\ge 0,$ therefore, for all $t \ge 0$, the clearance between the robot 
boundary and the obstacle boundary over the horizon $[0,\,t+m]$ remains 
nonnegative, and thus obstacle avoidance is guaranteed.
\end{proof}

In Thm.~\ref{thm:relaxed forward-invariance}, the slack variable $\omega_{i} \in [0,1)$ relaxes the DHOCBF constraint to improve feasibility, scaling with $1-\gamma_i$ to adjust the class $\kappa$ function and ensure obstacle avoidance for time-varying safety sets. Under high-order constraints, it also extends safety guarantees forward by $m$ steps, enabling an $m$-step lookahead. While DHOCBFs provide step-wise discrete-time safety, Lemma~\ref{lem:bounded move} ensures that the geometry of the LSB yields collision-free rigid-body motion over the continuous interval $\tau \in [t\Delta t, (t+m)\Delta t]$, where $\Delta t > 0$ is the sampling period. Thus, the LSB construction bridges discrete-time control and continuous-time safety. This enables us to corporate relaxed DHOCBF constraints in a nonlinear programming (NLP) problem:
\begin{subequations}
\label{eq:one step NLP}
\begin{align}
&\min_{\mathbf{u}_{t},\Omega_{t}} 
p(\mathbf{x}_{t+1})+ q(\mathbf{x}_{t},\mathbf{u}_{t},\Omega_{t})-\lambda\tilde{d}_{t+1}\label{subeq:obj3}\\
s.t. \ \ 
&\tilde{\psi}_{i-1}(\mathbf{x}_{t+1},\Omega_{t}|\tilde{d}_t,r_{c,t})+\notag\\ 
&\omega_{i,t}(\gamma_{i}-1)\tilde{\psi}_{i-1}(\mathbf{x}_{t},\Omega_{t}|\tilde{d}_t,r_{c,t})\ge 0,\label{subeq:dcbf3}\\
&\mathbf{u}_{t}\in \mathcal U \subset \mathbb{R}^{q},~\mathbf{x}_{t+1} \in \mathcal X \subset \mathbb{R}^{n}\label{subeq:control and state limits3},
\end{align}
\end{subequations}
where $0<\gamma_{i}\le 1$, $\Omega=[\omega_{1},...,\omega_{m}]^{\top }$, and the current state $\mathbf{x}_t$ is given. In \eqref{subeq:obj3}, the cost function consists of the current cost 
$q(\mathbf{x}_{t},\mathbf{u}_{t},\Omega_{t})$, the next-step cost 
$p(\mathbf{x}_{t+1})$, and a clearance penalty term 
$-\lambda \tilde{d}_{t+1}$, where $\lambda>0$ is a weight factor. The 
penalty $-\lambda \tilde{d}_{t+1} = -\lambda f_{\phi}(x_{t+1},y_{t+1},\theta_{t+1})$ 
discourages negative values of $\tilde{d}_{t+1}$ and thereby encourages the robot to maintain a greater clearance from obstacles at the next time step. The predictive MLP enables tractable clearance prediction by providing outputs that serve as decision variables in the NLP when the input configuration is defined as variables. The discrete-time dynamics in \eqref{eq:discrete-dynamics} are incorporated into the 
relaxed DHOCBF constraints in \eqref{subeq:dcbf3} to guarantee safety, e.g., $\tilde{\psi}_{0}(\mathbf{x}_{t+i},\Omega_t|\tilde{d}_t,r_{c,t})=\tilde{\psi}_{0}(\bar{f}_{i-1}(f(\mathbf{x}_{t},\mathbf{u}_{t})),\Omega_t|\tilde{d}_t,r_{c,t})$ based on Def. \ref{def:relative-degree} and $\omega_{i}\ge0, i\in\{1,...,m\}$. The state and input constraints are considered in \eqref{subeq:control and state limits3}. This NLP problem is solved at each time step, and the resulting optimal 
input $\mathbf{u}_{t}^{\ast}$ is applied to system \eqref{eq:discrete-dynamics} to generate the state 
$\mathbf{x}_{t+1}$ at the next time step.
\begin{figure}
\vspace*{3mm}
    \centering
\includegraphics[scale=0.3]{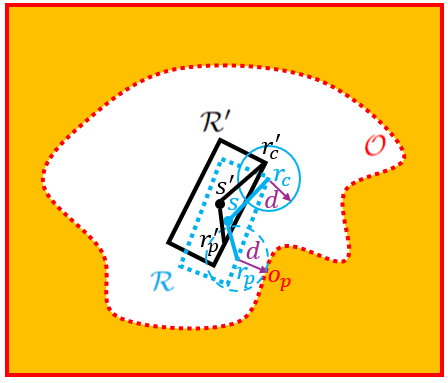}
    \caption{Illustration of the Extreme Point Anchor. The reference ball (dashed blue) is centered at boundary point $r_p$ with clearance $d$ to $o_p$, and translated to the farthest boundary point $r_c$ from the reference point $s$, forming the translated ball (solid blue).}
    \label{fig:RobObs}
\end{figure}
\begin{remark}
In Thm.~\ref{eq:new DHOCBF}, the relaxation technique places the slack variable 
$\omega_{i}$ in front of $(1-\gamma_{i})$, yielding
\begingroup
\small
\[
\tilde{\psi}_{i-1}(\mathbf{x}_{t+1},\Omega_t|\tilde{d}_t,r_{c,t}) \;\ge\;
\omega_{i,t}(1-\gamma_{i})\tilde{\psi}_{i-1}(\mathbf{x}_{t},\Omega_t|\tilde{d}_t,r_{c,t}),
\]
\endgroup
which is stronger than treating $\omega_{i}$ as a additive term,
\begingroup
\small
\[
\tilde{\psi}_{i-1}(\mathbf{x}_{t+1},\Omega_t|\tilde{d}_t,r_{c,t}) \;\ge\;
(1-\gamma_{i})\tilde{\psi}_{i-1}(\mathbf{x}_{t},\Omega_t|\tilde{d}_t,r_{c,t}) + \omega_{i,t}.
\]
\endgroup
Both improve feasibility, but only the multiplicative form preserves safety guarantees when $\omega_{i,t} \in [0,1)$ and $\tilde{\psi}_{i-1}(\mathbf{x}_t) \ge 0$. In contrast, the additive form may violate safety if $\omega_{i,t} < 0$. Prior works \cite{zeng2021enhancing,liu2023iterative} use similar strategies; however, \cite{zeng2021enhancing} is limited to relative-degree-one DCBFs, and \cite{liu2023iterative} partially relaxes the decay rate via linearization. In contrast, our method fully relaxes the decay rate without linearization, enabling general nonlinear DHOCBFs.
\end{remark}

\subsection{Complexity Scaling with State and Input Dimensions and Constraint Numbers}
\label{subsec: Complexity Analysis}
The NLP problem in \eqref{eq:one step NLP} is solved at each time step, and its complexity depends on the number of state and input variables as well as the order $m$ of the DHOCBF. For a system with state dimension $n$ and input dimension $q$, the decision vector scales as $\mathcal{O}(n+q+m)$, while the constraints include bounds and $m$ DHOCBF terms. Due to the nonlinear nature of the learned LSBs and slack relaxation, the NLP remains nonconvex, and solver time grows superlinearly with the number of variables and constraints. While the formulation can be extended to an $N$-step nonlinear model predictive control for improved safety and convergence, we focus on the single-step case to limit computation time.
\section{NUMERICAL RESULTS}
\label{sec: Numerical results}
In this section, we address an autonomous navigation problem, modeling 
the controlled robot with various shapes, including a rectangle, triangle and cross-shape (nonconvex). The proposed optimization-based control algorithm is able to generate dynamically feasible, collision-free trajectories even 
in tight maze environments, as illustrated in Fig. \ref{fig: Robot Trajectory}. Animations of these 
navigation scenarios are provided at \url{https://youtu.be/dqay1R2u5-I}.
\subsection{Numerical Setup}

\subsubsection{System Dynamics}
Consider a discrete-time unicycle model in the form
\begin{equation}
\label{eq:unicycle-model}
\begin{bmatrix} x_{t+1}{-}x_t \\ y_{t+1}{-}y_t \\ \theta_{t+1}{-}\theta_t \\ v_{t+1}{-}v_t \end{bmatrix}{=}\begin{bmatrix} v_{t} \cos(\theta_{t}) \Delta t \\ v_{t}\sin(\theta_{t}) \Delta t \\ 0 \\ 0 \end{bmatrix}{+}\begin{bmatrix} 0 & 0 \\ 0 & 0 \\ \Delta t & 0 \\ 0 & \Delta t \end{bmatrix}
\begin{bmatrix} u_{1,t} \\ u_{2,t} \end{bmatrix},
\end{equation}
where $\mathbf{x}_{t}=[x_{t},y_{t},\theta_{t},v_{t}]^{\top}$ captures the 2-D location, heading angle, and linear speed; $\mathbf{u}_{t}=[u_{1,t},u_{2,t}]^{\top}$ represents angular velocity ($u_{1}$) and linear acceleration ($u_{2}$), respectively.
The system is discretized with $\Delta t = 0.05$.
System~\eqref{eq:unicycle-model} is subject to the following state and input constraints:
\begin{equation}
\begin{split}
\label{eq:state-input-constraint}
\mathcal{X}&=\{\mathbf{x}_{t}\in \mathbb{R}^{4}: -5\cdot \mathcal{I}_{4\times1} \le \mathbf{x}_{t}\le 5\cdot \mathcal{I}_{4\times1}\},\\
\mathcal{U}&=\{\mathbf{u}_{t}\in \mathbb{R}^{2}: -10\cdot \mathcal{I}_{2\times1} \le \mathbf{u}_{t}\le 10\cdot \mathcal{I}_{2\times1}\}.
\end{split}
\end{equation}
\subsubsection{System Configuration}
The rectangle, triangle, and cross-shape robots have dimensions $(0.35{\times}0.2)$, $(0.4{\times}0.3)$, and $(0.35{\times}0.35)$, respectively, with reference points located at their center, right-angle vertex, and arm intersection. To account for reference point offsets, the initial positions are set to $[0.5,4.7]^{\top }$ (rectangle and cross-shape) and $[0.5,4.6]^{\top }$ (triangle), targeting $\mathbf{x}_{T} = [4.7,0.5]^{\top }$ or $[4.7,0.4]^{\top }$ accordingly. All robots start with an initial speed of $0.5$, but the target speed is $0$ for the rectangle robot and $0.1$ for the triangle and cross-shape robots. Heading angles follow $\theta_{0} = \theta_{T} = \text{atan2}\!\left(\tfrac{y_{T}-y_{t}}{x_{T}-x_{t}}\right)$. The robot switches trajectory once its reference point enters a waypoint-centered circle of radius $0.1$. The other reference vectors are $\mathbf{u}_{r} = [0,0]^{\top }$ and $\Omega_{r} = [1,1]^{\top }$.
\subsubsection{Map Processing} The scope of the map is $x\in[0, 5]$ and $y\in[0, 5].$ The boundary samples of the robots and obstacles are manually selected.
\label{subsubsec:MLP structure}
Our neural network follows the structure in \ref{sebsebsec:MLP Arch}. For training, we generated 3,396,600 samples by sampling 9,345 random locations across the map and rotating each over $360^\circ$ (one degree per sample). An additional 686,880 samples were collected for testing using the same procedure. The prediction accuracy of the MLP is evaluated using the MSE in \eqref{eq: MSE}.
\subsubsection{DHOCBF}
The candidate DHOCBF function $\tilde{\psi}_{0}(\mathbf{x}_{t}|\tilde{d}_t,r_{c,t})$ is defined by Eq. \eqref{eq:LSB-DHOCBF1} where the predicted clearance $\tilde{d}_t$ is defined by MLP in Sec. \ref{subsec: Residual MLP} and the center of translated ball $r_{c,t}$ is defined by Eq. \eqref{eq:center of rc}. The hyperparameters $\gamma_{1}$ and $\gamma_{2}$ are set to 0.1. 
\subsubsection{NLP Design}
The cost function of the NLP problem \eqref{eq:one step NLP} consists of current cost
$q(\mathbf{x}_{t},\mathbf{u}_{t},\Omega_{t})= ||\mathbf{x}_{t}-\mathbf{x}_{T}||_Q^2 + ||\mathbf{u}_{t}-\mathbf{u}_{r}||_R^2 +||\Omega_{t}-\Omega_{r}||_S^2$, next-step cost $p(\mathbf{x}_{t})=||\mathbf{x}_{t+1}-\mathbf{x}_{T}||_P^2$ and a clearance penalty term 
$-\lambda \tilde{d}_{t+1}$, where $Q=P=100\cdot \mathcal{I}_{4}, R= \mathcal{I}_{2}$, $S=100\cdot \mathcal{I}_{2}$ and $\lambda=1000$. 
\subsubsection{Solver Configurations and CPU Specs}
\label{subsubsec: specs}
The NLP problem is formulated in Python using CasADi \cite{andersson2019casadi} and solved with IPOPT \cite{biegler2009large} on Ubuntu 20.04, running on an AMD Ryzen 7 5800U CPU, with all inference performed on the CPU. Model training and inference are implemented in PyTorch, with training carried out on a Linux desktop equipped with an NVIDIA RTX 4090 GPU to leverage hardware acceleration.
\begin{figure*}[!t]
    \vspace*{0.2cm}
    \centering
    \begin{subfigure}[t]{0.28\linewidth}
        \centering
    \includegraphics[width=1.0\linewidth]{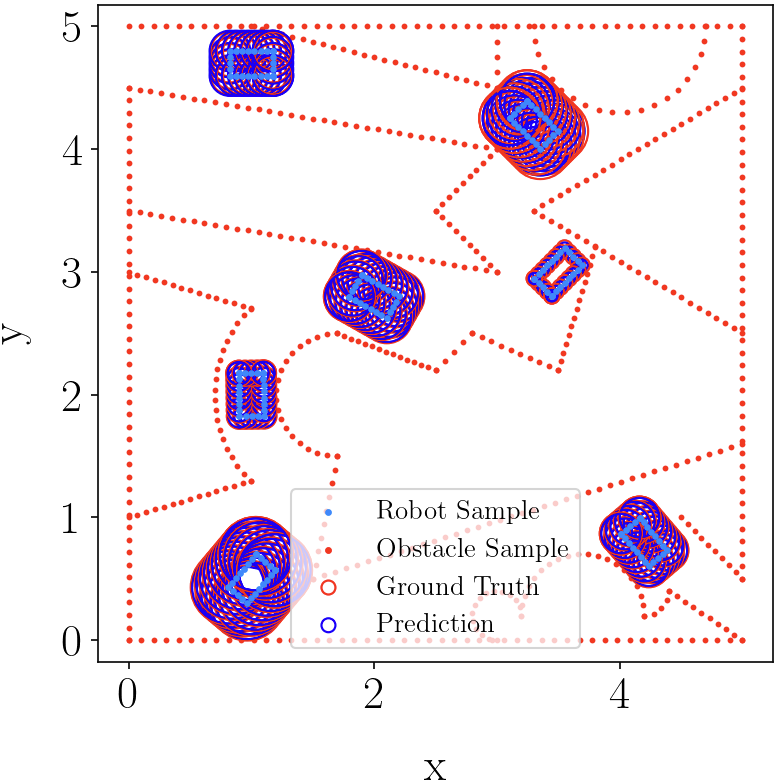}
        \caption{Rectangle robot}
        \label{subfig:1}
    \end{subfigure}
    \begin{subfigure}[t]{0.28\linewidth}
        \centering
        \includegraphics[width=1.0\linewidth]{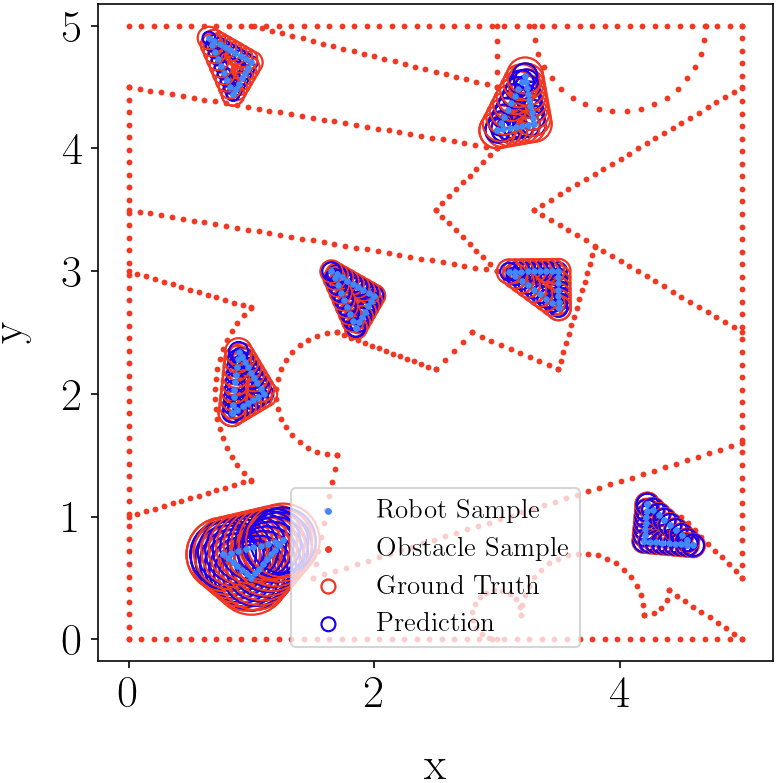}
        \caption{Triangle robot}
        \label{subfig:2}
    \end{subfigure}  
    \begin{subfigure}[t]{0.28\linewidth}
        \centering
        \includegraphics[width=1.0\linewidth]{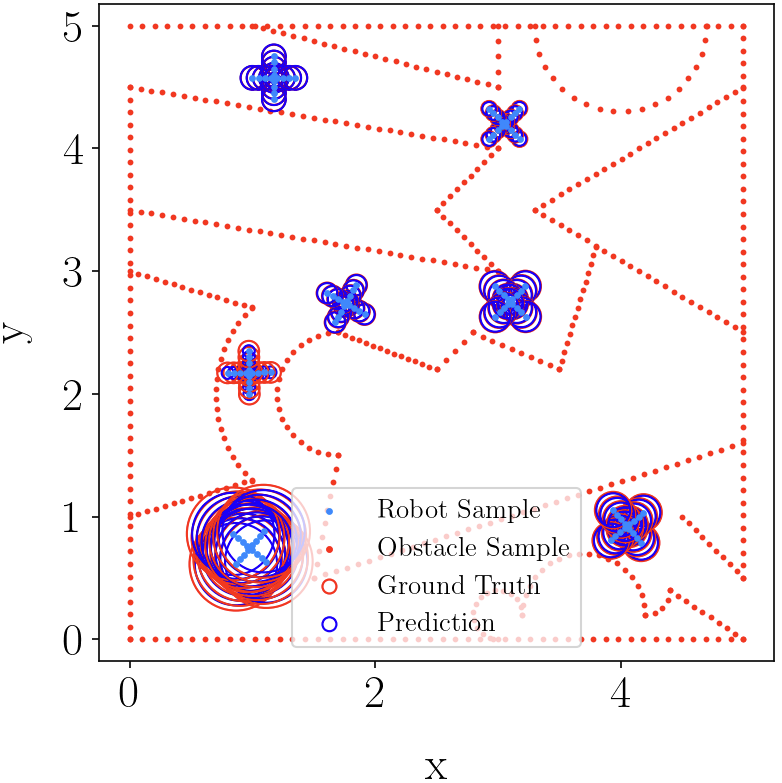}
        \caption{Cross-shape robot}
        \label{subfig:3}
    \end{subfigure}
        \begin{subfigure}[t]{0.24\linewidth}
        \centering
        \includegraphics[width=1.0\linewidth]{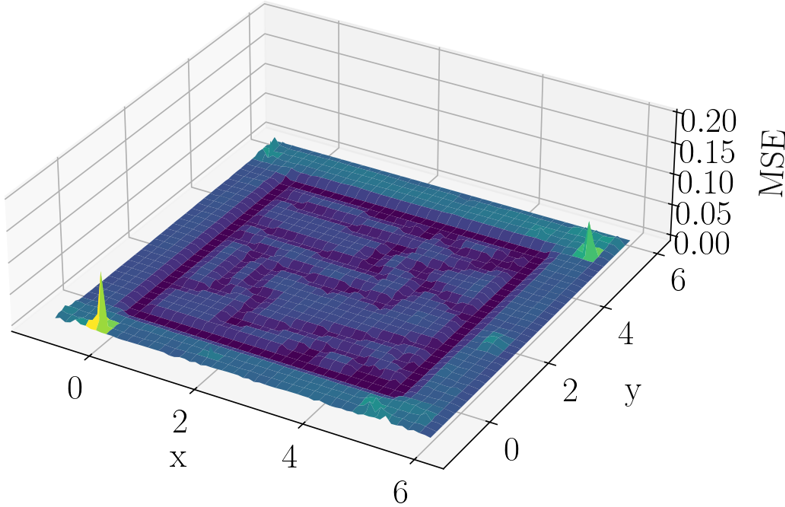}
        \caption{Isometric view}
        \label{subfig:4}
    \end{subfigure}
            \begin{subfigure}[t]{0.24\linewidth}
        \centering
        \includegraphics[width=1.0\linewidth]{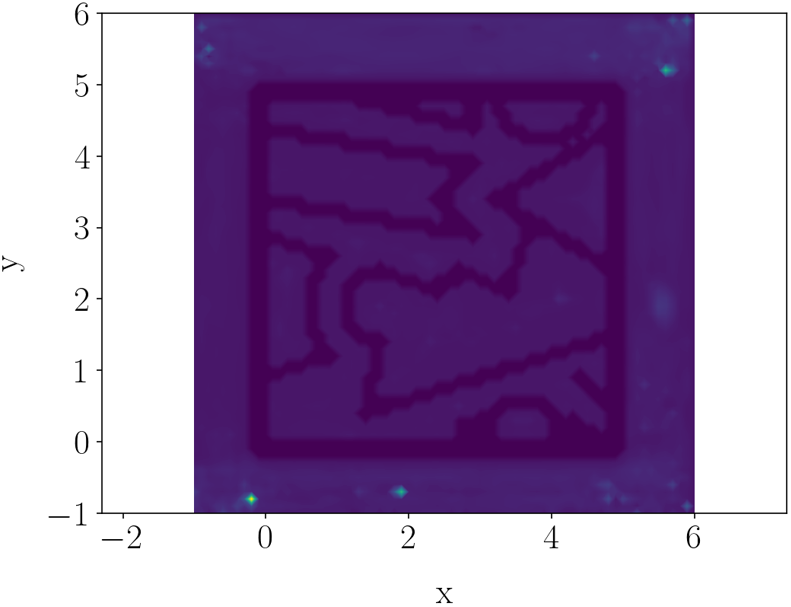}
        \caption{Top view}
        \label{subfig:5}
    \end{subfigure}
            \begin{subfigure}[t]{0.24\linewidth}
        \centering
        \includegraphics[width=1.0\linewidth]{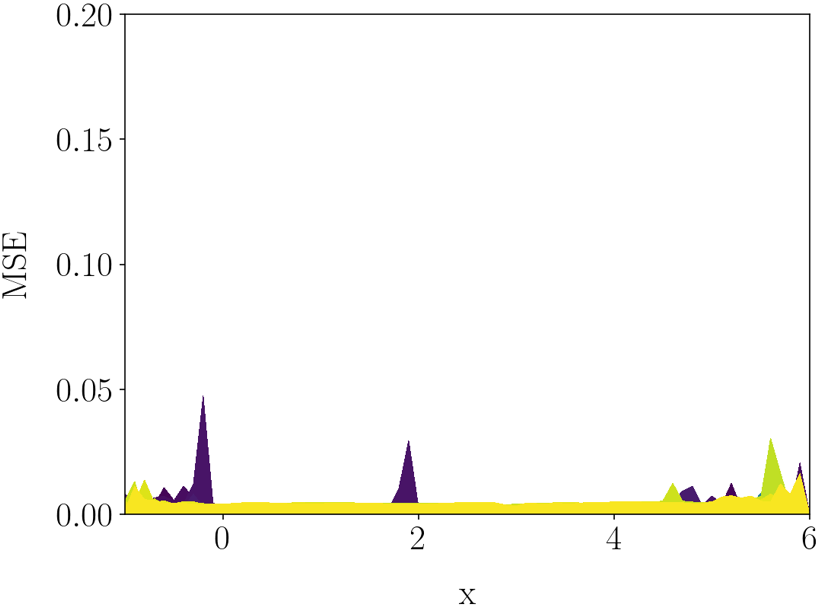}
        \caption{$x\text{-MSE}$ plane view}
        \label{subfig:6}
    \end{subfigure}
            \begin{subfigure}[t]{0.24\linewidth}
        \centering
        \includegraphics[width=1.0\linewidth]{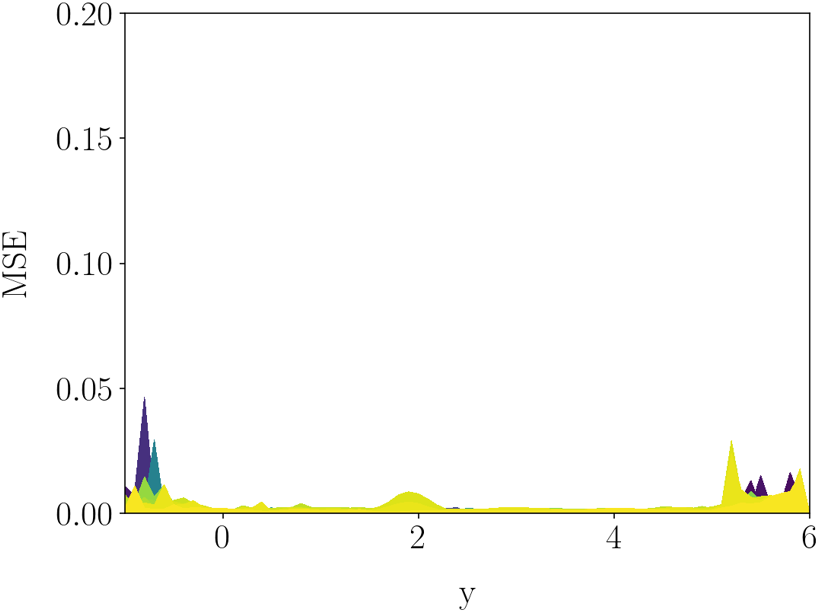}
        \caption{$y\text{-MSE}$ plane view}
        \label{subfig:7}
    \end{subfigure}
    \caption{Visualization of MLP prediction accuracy. (a–c) Comparison between ground-truth clearances (red circles) and MLP-predicted clearances (blue circles) for rectangle, triangle, and cross-shape robots, showing near coincidence of the two. (d-g) 3D visualization of the MSE of the testing data for the rectangle robot, where errors are smaller near obstacle boundaries and larger elsewhere, with most prediction errors below 0.01.}
    \label{fig: Prediction Error}
\end{figure*}
\subsubsection{MLP Prediction Accuracy}
The prediction accuracy of MLP is demonstrated in Fig. \ref{fig: Prediction Error}. We selected seven different locations from the testing data on the map, where each location corresponds to a specific configuration $(x,y,\theta)$ for each robot shape. These selected configurations are illustrated by the blue boundary samples of the robots. For each case, the ground-truth clearance was computed as the minimum distance from each robot boundary sample to the obstacle boundary samples. Using this clearance as the radius, we generated red circles centered at each sample of the robot as translated circles. Similarly, the MLP-predicted clearance was used as the radius to generate blue circles centered at each pixel. As shown in Figs.~\ref{subfig:1},~\ref{subfig:2} and~\ref{subfig:3}, the red and blue circles nearly coincide, indicating high prediction accuracy. Figs.~\ref{subfig:4},~\ref{subfig:5},~\ref{subfig:6} and \ref{subfig:7} show the relationship between the MLP’s prediction error (MSE) and the rectangle robot’s positions in 3D and projected views. The results show that the MSE dips (darker colors) near obstacle boundaries but rises (brighter colors) in other regions, implying that prediction errors are relatively larger away from obstacle boundaries. Overall, statistical analysis indicates that the prediction error at most testing locations is below $0.01$, demonstrating that the trained MLP can predict clearance with high accuracy.

\subsubsection{Safe Trajectory Generation}
Eight closed-loop trajectories connecting the waypoints are illustrated in Fig.~\ref{fig: Robot Trajectory}. For clarity, the robot locations are plotted at fixed time intervals to visualize the overall obstacle-avoidance behavior. Even within narrow passages requiring sharp turns, robots of different shapes are able to navigate safely through the free space, regardless of whether the robots or obstacles are convex or nonconvex. This demonstrates the adaptability of the proposed controller to complex environments with narrow passages and nonconvex geometries.

In addition, we record the computation time per NLP solve for each robot shape. The average single-step computation times are $14.16 \pm 1.26$ ms for the rectangle robot, $13.96 \pm 1.94$ ms for the triangle robot, and $15.82 \pm 2.12$ ms for the cross-shape robot, demonstrating the high efficiency of the optimization algorithm.

\begin{figure*}[!t]
    \vspace*{0.2cm}
    \centering
    \begin{subfigure}[t]{0.28\linewidth}
        \centering
        \includegraphics[width=1.0\linewidth]{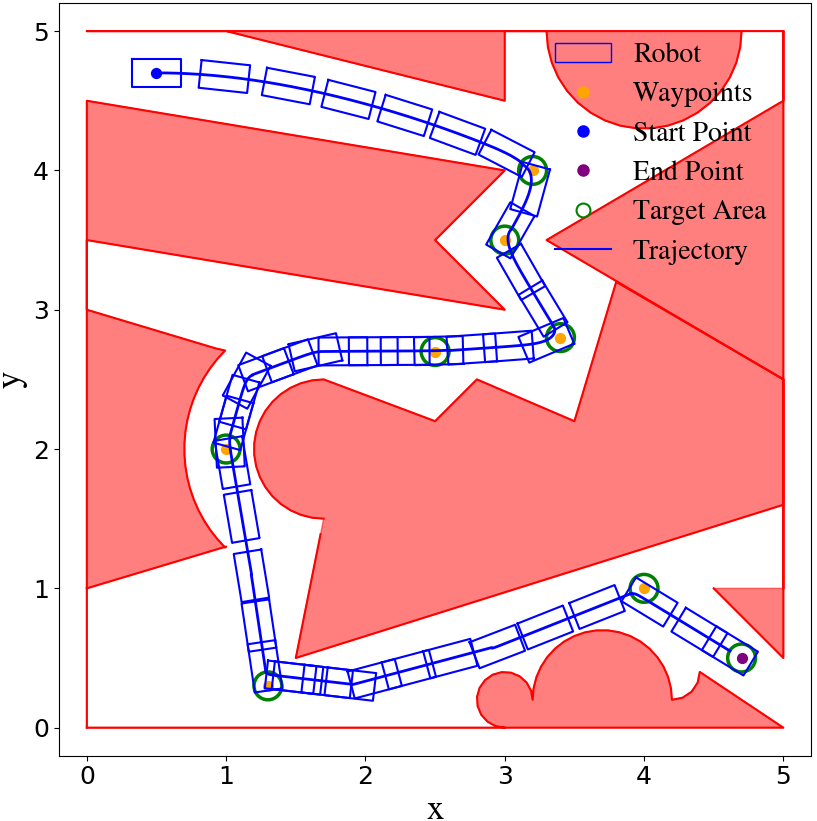}
        \caption{Rectangle robot}
        \label{subfig:8}
    \end{subfigure}
    \begin{subfigure}[t]{0.28\linewidth}
        \centering
        \includegraphics[width=1.0\linewidth]{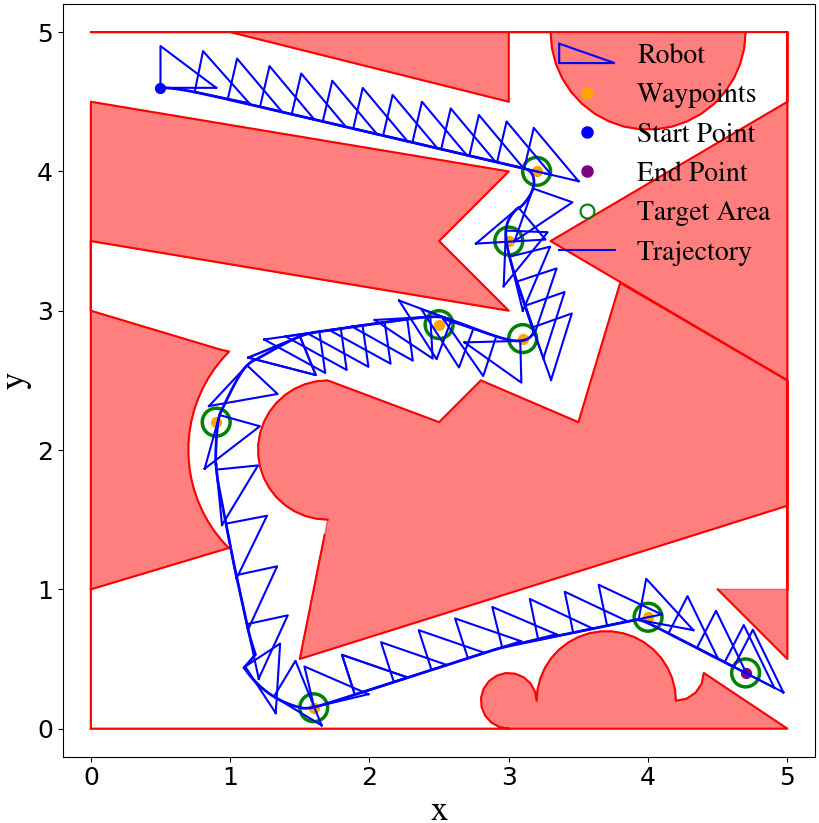}
        \caption{Triangle robot}
        \label{subfig:9}
    \end{subfigure}  
    \begin{subfigure}[t]{0.28\linewidth}
        \centering
        \includegraphics[width=1.0\linewidth]{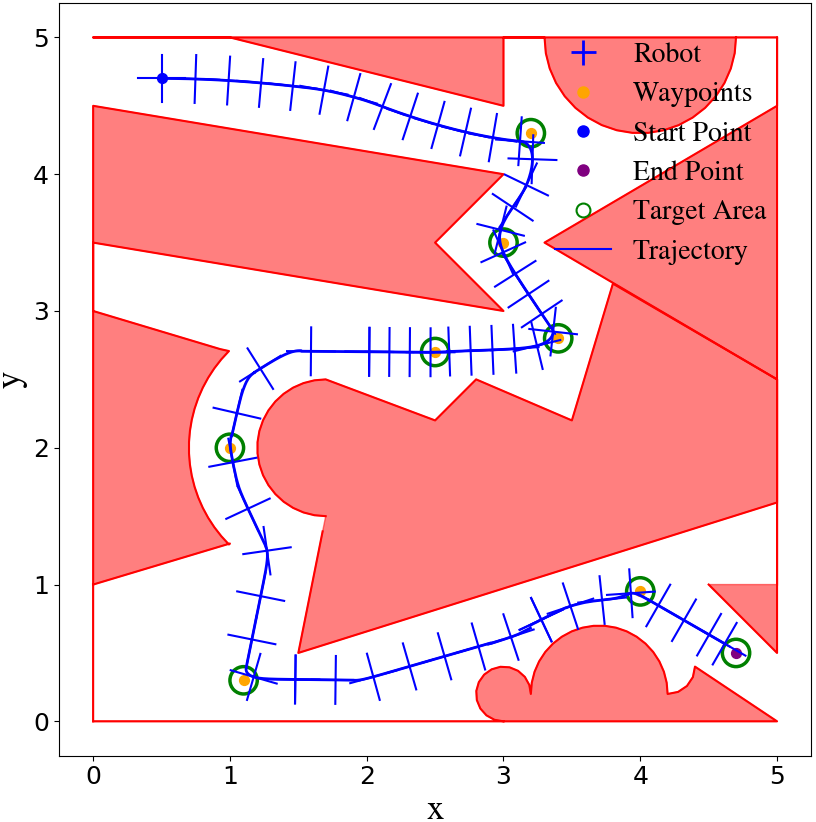}
        \caption{Cross-shape robot}
        \label{subfig:10}
    \end{subfigure}
    \caption{Closed-loop trajectories of rectangle, triangle, and cross-shape robots in a maze-like environment. Robots successfully navigate narrow passages and sharp turns, demonstrating that the proposed DHOCBF-based controller ensures safe obstacle avoidance for both convex and nonconvex robots and obstacles.}
    \label{fig: Robot Trajectory}
\end{figure*}

\section{Conclusion and Future Work}
\label{sec:Conclusion and Future Work}
In this paper, we develope a novel framework that combines learning-based clearance prediction with DHOCBF-constrained control through the construction of Local Safety Balls. By leveraging a residual MLP to predict clearances and anchoring them via the EPA, we provide a tractable way to guarantee transitional safety for robots with arbitrary shapes navigating in cluttered environments. The proposed relaxation technique further enhance feasibility for high-order constraints, and the closed-loop simulations validated both the safety and efficiency of our approach. Future research will extend this framework in several directions, such as integrating the approach into a model predictive control framework to improve long-horizon safety and performance, and considering dynamic obstacles to handle more realistic and time-varying environments.

\bibliographystyle{IEEEtran}
\balance
\bibliography{references.bib}

\end{document}